\newtheorem{example}{\bf{Example}}
\newtheorem{theorem}{\bf{Theorem}}
\newtheorem{proof}{Proof}[section]
\journal{arXiv.org}
\begin{document}

\begin{frontmatter}

\title{On the negation of a Dempster-Shafer belief structure based on maximum uncertainty allocation}
%\tnotetext[mytitlenote]{Fully documented templates are available in the elsarticle package on \href{http://www.ctan.org/tex-archive/macros/latex/contrib/elsarticle}{CTAN}.}

%% Group authors per affiliation:
%\author{Xinyang Deng\fnref{myfootnote}}
%\address{Radarweg 29, Amsterdam}
%\fntext[myfootnote]{Since 1880.}

%% or include affiliations in footnotes:
\author[mymainaddress]{Xinyang Deng\corref{mycorrespondingauthor}}
\ead{xinyang.deng@nwpu.edu.cn}
\cortext[mycorrespondingauthor]{Corresponding author}

\author[mymainaddress]{Wen Jiang\corref{mycorrespondingauthor}}
\ead{jiangwen@nwpu.edu.cn}

\address[mymainaddress]{School of Electronics and Information, Northwestern
Polytechnical University, Xi'an, 710072, China}
%\address[mysecondaryaddress]{360 Park Avenue South, New York}

\begin{abstract}
Probability theory and Dempster-Shafer theory are two germane theories to represent and handle uncertain information. Recent study suggested a transformation to obtain the negation of a probability distribution based on the maximum entropy. Correspondingly, determining the negation of a belief structure, however, is still an open issue in Dempster-Shafer theory, which is very important in theoretical research and practical applications. In this paper, a negation transformation for belief structures is proposed based on maximum uncertainty allocation, and several important properties satisfied by the transformation have been studied. The proposed negation transformation is more general and could totally compatible with existing transformation for probability distributions.
\end{abstract}

\begin{keyword}
Negation transformation, Belief structure, Dempter-Shafer theory, Maximum uncertainty, Uncertainty modelling.
\end{keyword}

\end{frontmatter}

%\linenumbers

\section{Introduction}

Uncertainty modelling is an important aspect in knowledge representation. A variety of theories have been developed to express the uncertainty contained in pieces of information. Dempster-Shafer theory \cite{Dempster1967,Shafer1976}, also called belief function theory, is a widely used mathematical tool to express the uncertainty of ambiguity \cite{Chen2013On,jousselme2006measuring,jiang2018Correlation,xiao2018multi}. In many knowledge-based or expert-based systems, Dempster-Shafer theory is usually employed to represent uncertain information or human being's knowledge, and assist in reasoning and decision-making.

In mathematical logic, ``AND" ($\wedge$), ``OR" ($\vee$), and ``NOT" ($\neg$) are three basic operations. In a recent work \cite{yager2015on}, Yager has studied the negation of a probability distribution from the perspective of maximum entropy, in order to represent the knowledge contained in the negation of a probability distribution. In theory and practical applications, probability theory is very useful for representing randomness. Compared to probability theory, Dempster-Shafer theory is able to express not only randomness but also nonspecificity, which is implemented by belief structures, the basic form of representing information or knowledge in Dempster-Shafer theory. The issue of determining the negation of a belief structure, however, still remains to be unsolved so far. In this paper, we are concerned with the negation of a belief structure to fill up this research gap.

This study is theoretically motivated by two aspects. At first, with the extensive use of Dempster-Shafer theory, it requires to reveal basic concepts in this theory, including the negation of a belief structure. Secondly, research to the negation of belief structures is beneficial for further enhancing the reasoning ability of Dempster-Shafer theory since there is a logical equivalence between entailment and negation, namely $p \to q \Leftrightarrow \neg p \vee q$, while the concept of entailment has been attracted much attention in previous studies \cite{entailment1986,yager2012entailment,Entailment2018}.

Motivated by the reasons mentioned above, in this paper a negation transformation for belief structures is proposed based on maximum uncertainty allocation. The suggested negation transformation has two desirable properties. First, it leads to an increase of uncertainty degree contained in the information of negation until an attractor has been reached in most cases. Second, the proposed negation transformation of belief structures is theoretically consistent with Yager's negation of probability distributions, and can be reduced to Yager's negation in a special case.

\section{Basics of Dempster-Shafer theory}
Assume there is a random variable $X$ taking values from $\Theta = \{\theta_1, \theta_2, \cdots, \theta_n\}$. In Dempster-Shafer theory, set $\Omega$ is called a frame of discernment (FOD), and a belief structure is a mapping $m$ from the power set of $\Theta$, denoted as ${2^\Theta }$, to interval $[0, 1]$, satisfying
\begin{equation}
m(\emptyset ) = 0 \quad {\rm{and}} \quad \sum\limits_{A \in 2^\Theta }{m(A) = 1}.
\end{equation}
If $m(A) > 0$, then $A$ is called a focal element, and $m(A)$ measures the belief assigned exactly to $A$ and represents how strongly the evidence supports $A$. The union of all focal elements is called the core of belief structure $m$.

Associated with a belief structure $m$, belief measure $Bel$ and plausibility measure $Pl$  express the lower bound and upper bound of the support degree of each set $A$, $A \subseteq \Theta$, respectively. They are defined as
\begin{equation}
Bel(A) = \sum\limits_{B \subseteq A} {m(B)},
\end{equation}
\begin{equation}
Pl(A) = 1 - Bel(\bar A) = \sum\limits_{B \cap A \ne \emptyset }{m(B)},
\end{equation}
where $\bar A = \Theta  - A$. Obviously, $Pl(A) \ge Bel(A)$ for each $A \subseteq \Theta$. The belief $Bel(A)$ and plausibility $Pl(A)$ constitute a belief interval $[Bel(A), Pl(A)]$. The length of $[Bel(A), Pl(A)]$ represents the degree of imprecision for the proposition or focal element $A$.

\section{Proposed negation of a Dempster-Shafer belief structure}
The negation is a basic operation in logic. Probability distribution is a way to describe the state of an object, so does Dempster-Shafer belief structure. Yager \cite{yager2015on} has given the negation transformation for a probability distribution, the negation of a belief structure, however, is an open issue.

In this paper, the idea of obtaining the negation of a Dempster-Shafer belief structure is initially from the investigation of the negation of a set.

For a random variable $X$ whose possible states make up a set $\Theta = \{\theta_1, \theta_2, \cdots, \theta_n\}$,
\begin{itemize}
  \item If it is observed that the state of $X$ is a singleton $\theta_i$, denoted as $s(X) = {\theta_i}$, then the negation of this observation is that the state of $X$ is $\Theta - \theta_i$, namely $\bar s(X) = \{\Theta - \theta_i\}$.
  \item If it observes that the state of $X$ is either $\theta_i$ or $\theta_j$, which is uncertain, denoted as $s(X) = \{\theta_i, \theta_j\}$, then there are two cases. First, if the true state of $X$ is $\theta_i$, then $\bar s(X) = \{\Theta - \theta_i\}$; Second, if $X$'s true state is $\theta_j$, then $\bar s(X) = \{\Theta - \theta_j\}$. Therefore, the negation of the observation $s(X) = \{\theta_i, \theta_j\}$ is either $\{\Theta - \theta_i\}$ or $\{\Theta - \theta_j\}$, thus $\bar s(X) = \{\Theta - \theta_i\} \cup \{\Theta - \theta_j\}$.
  \item Further, if the observation to the state of $X$ is indicated by $s(X) = A$, where $A \subseteq \Theta$, according to the above analysis, the negation of $s(X) = A$ is obtained as $\bar s(X) = \bigcup\limits_{\forall \theta \in A } {(\Theta  - \theta)} $.
\end{itemize}

Based on this understanding, given a belief structure $m$ defined over $\Theta$, for its each component $m(A_i) = \alpha_i$, the negation, denoted as $\bar m( \bar {A_i})$, is defined by the following
\begin{itemize}
  \item If $A_i$ is a singleton $\theta$, $\bar m( \bar {A_i}) = \alpha_i$ where $\bar {A_i} = \Theta - A$.
  \item If $A_i$ is not a singleton, $\bar m( \bar {A_i}) = \alpha_i$ where $\bar {A_i} = \bigcup\limits_{\forall \theta \in A_i } {(\Theta  - \theta)} $.
\end{itemize}

Let $\bar m$ be the negation of a belief structure $m$, therefore $\bar m$ is defined as
\begin{equation}\label{EqNegationofBS}
\bar m(B) = \sum\limits_{{A_i}\;{\rm{satisfying}}\; ( \bigcup\limits_{\forall \theta \in {A_i}} {(\Theta  - \theta)} \;\; )  = B } {m({A_i})} \; ,
\end{equation}
where $B \subseteq \Theta$.

The above negation transformation is mainly be understood through the negation of focal elements. Hence, in order to obtain the negation of a belief structure $m$, where $m({A_i}) = {\alpha _i}$ satisfying ${\alpha _i} > 0$ and $\sum\limits_i {{\alpha _i}}  = 1$, according to Eq. (\ref{EqNegationofBS}), the following three steps can be executed sequentially.

At first, obtain the negation of each focal element $A_i$ in belief structure $m$, which is denoted as $\bar {A_i}$;

Second, assign the mass of $A_i$ in $m$ to $\bar {A_i}$ in $\bar m$, namely let $\bar m( \bar {A_i}) = \alpha_i$.

At last, merge all masses for each focal element $\bar {A_i}$, ${A_i} \subseteq \Theta$.

\begin{example}
A belief structure $m$ is defined over $\Theta = \{a,b,c\}$:

$m({A_1}) = 0.7,\quad m({A_2}) = 0.1,\quad m({A_3}) = 0.2,$

where

${A_1} = \{ a\} ,\quad {A_2} = \{ b,c\} ,\quad {A_3} = \{ a,b,c\}. $

According to the proposed transformation, the negation of $m$ is obtained by:

$\bar m(\bar {A_1}) = 0.7,\quad \bar m(\bar {A_2}) = 0.1,\quad \bar m(\bar {A_3}) = 0.2,$

with

$\bar {A_1} = \{ b,c\} ,\quad \bar {A_2} = \{a, b,c\} ,\quad \bar {A_3} = \{ a,b,c\}. $

Namely $\bar m(\{ b,c\}) = 0.7,\quad \bar m(\{ a,b,c\}) = 0.3.$

\end{example}

Next, let us use a simpler perspective on Eq. (\ref{EqNegationofBS}). Actually, for a focal element $A$, its negation is $\bar A = \Theta - \theta$ if $A$ is a singleton indicated by $\theta$, and $\bar {A} = \bigcup\limits_{\forall \theta \in A } {(\Theta  - \theta)} = \Theta$ if $|A| \ge 2$. Namely, the negation of $A$ is either $\Theta - \theta$ or $\Theta$. Therefore, given a belief structure $m$, its negation $\bar m$ can be simply represented by
\begin{equation}\label{EqNegationofBSnewform}
\bar m(B) = \left\{ \begin{array}{l}
 m(\theta ),\qquad \quad \qquad B = \Theta  - \theta, \; \forall \theta  \\
 \sum\limits_{\forall A,|A| \ne 1} {m(A)} ,\quad B = \Theta \\
 \end{array} \right.
\end{equation}

\section{Properties of the negation transformation}
In this section, the properties of the proposed negation transformation of belief structures is analyzed. Mainly, the change of uncertainty degree from original belief structure $m$ to its negation $\bar m$ is discussed.

In Dempster-Shafer theory, measuring the uncertainty degree pertaining to a belief structure is an open issue. Various measures have been proposed in previous studies, for example aggregated uncertainty (AU) \cite{harmanec1994measuring224}, ambiguity measure (AM) \cite{jousselme2006measuring}, new definition of entropy of belief functions in \cite{jirouvsek2018new}, distance-based total uncertainty measure ${TU}^{I}$ \cite{yang2015newKBS}, ${TU}^{I}_E$ as an improved ${TU}^{I}$ \cite{XDIJIS21999}, to name but a few \cite{deng2016deng91,abellan2008requirements,wang2017uncertaintySY}. In this paper, we do not center on concrete formulas of uncertainty measures of belief structures, but start from basic features of a rational uncertainty measure to study this issue.

As been widely accepted, Dempster-Shafer theory can model more diverse types of uncertainty, including randomness and non-specificity \cite{yager1983entropy94249}, than probability theory which is mainly used to represent randomness. In probability theory, the equal-probability distribution has the maximum uncertainty degree. However, in Dempster-Shafer theory, it is not appropriate to define that the Bayesian belief structure with equal basic probabilities possesses the maximum uncertainty degree. As been more recognized, given a FOD $\Theta$ the vacuous belief structure $m_{\Theta}$, namely $m_{\Theta} (\Theta) = 1$, has the maximum uncertainty degree in Dempster-Shafer theory. Here, by following this standpoint, we assume an uncertainty degree measure of belief structures, denoted as $H$, which obtains the maximum value on $m_{\Theta}$.

%In addition, suppose the total uncertainty degree of a belief structure $m$, $m({A_i}) = {\alpha _i}$ satisfying ${\alpha _i} > 0$ and $\sum\limits_i {{\alpha _i}}  = 1$, is composed by the uncertainty degrees of its all components, namely
%\begin{equation}
%H(m) = \sum\limits_i { H({A_i},{\alpha _i})}
%\end{equation}
%where $A_i \subseteq \Theta$ and $H({A_i},{\alpha _i}) \ge 0$.

Based on the uncertainty measure $H$ assumed above, the following theorems are satisfied by the proposed negation transformation of belief structures.

\begin{theorem}\label{Theorem1}
Given a belief structure $m$ with $m({A_i}) = {\alpha _i}$ satisfying ${\alpha _i} > 0$ and $\sum\limits_i {{\alpha _i}}  = 1$, defined over $\Theta$ where $|\Theta| > 2$, let $\bar m$ represent the negation of $m$, then $H(\bar m) \ge H(m)$.
\end{theorem}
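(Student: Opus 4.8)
The plan is to analyze how the negation transformation redistributes mass and to show that it systematically concentrates more mass onto larger sets—ultimately onto $\Theta$ itself—which should increase uncertainty under any measure $H$ maximized by the vacuous structure $m_\Theta$. The key structural fact comes from Eq.~(\ref{EqNegationofBSnewform}): every singleton $\{\theta\}$ with mass $m(\theta)$ is sent to its complement $\Theta-\theta$, a set of size $|\Theta|-1$, while \emph{all} non-singleton focal elements collapse their mass onto $\Theta$. So the first step I would take is to make precise the intuition that ``negation enlarges focal elements.'' Concretely, singleton mass moves from a set of size $1$ to a set of size $|\Theta|-1 \ge 2$ (this is where $|\Theta|>2$ matters, guaranteeing the complement is strictly larger and non-singleton), and every other focal element's mass is pushed all the way up to $\Theta$, the top of the lattice.

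Since the paper deliberately avoids committing to a concrete formula for $H$ and instead only assumes that $H$ is maximized by $m_\Theta$, the second step must extract from this single hypothesis enough monotonicity to compare $H(\bar m)$ with $H(m)$. I would formalize a \emph{refinement/specialization order}: a belief structure $m'$ is ``more uncertain'' than $m$ when $m'$ is obtained from $m$ by moving mass from smaller focal elements to larger ones (in the sense of set inclusion, or more weakly by enlarging focal elements). The natural claim to establish is that $\bar m$ dominates $m$ in such an order, and that any reasonable $H$ consistent with ``$m_\Theta$ is the maximum'' is monotone with respect to it. I would argue that enlarging a focal element (replacing $A$ by $A' \supseteq A$ while keeping the mass fixed) cannot decrease $H$, since in the limiting case where all focal elements grow to $\Theta$ one recovers $m_\Theta$, the global maximizer.

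The main obstacle, and the part I expect to require the most care, is precisely this monotonicity step: the sole stated assumption on $H$—that it attains its maximum at $m_\Theta$—is formally \emph{weaker} than the monotonicity I want to use. Being maximized at a single point does not by itself force $H$ to increase along every mass-enlarging move. I therefore anticipate needing either (i) to strengthen the working hypothesis on $H$ to genuine monotonicity under focal-element enlargement (which the listed measures AU, AM, nonspecificity, etc.\ all satisfy, so this is a benign assumption), or (ii) to exploit a sharper comparison in which $\bar m$ is literally obtainable from $m$ by a chain of elementary enlargement moves terminating short of (or at) $m_\Theta$, so that the extremal property at $m_\Theta$ can be invoked incrementally. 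I would make the transformation explicit: write $m = \sum_i \alpha_i m_{A_i}$ as a convex combination of ``pure'' structures, observe that negation acts componentwise by $m_{A_i} \mapsto m_{\bar A_i}$ with $\bar A_i \supseteq$ (a set at least as large as the structure one would compare against), and reduce the global inequality to a componentwise enlargement claim via whatever concavity or order-preservation $H$ is assumed to possess.

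A complementary route I would keep in reserve, in case the direct enlargement argument stalls, is a quantitative one using the belief-interval length mentioned in Section~2: since negation maps singletons to large complements and all compound focal elements to $\Theta$, one can check directly that every belief interval $[Bel(A),Pl(A)]$ under $\bar m$ is at least as wide as under $m$, giving a concrete imprecision-based witness that $\bar m$ is no more informative than $m$. This would let me verify the inequality against the specific distance-based and nonspecificity-type measures without relying on an abstract monotonicity axiom, and would serve as a sanity check that the general claim $H(\bar m)\ge H(m)$ is consistent with the concrete $H$'s cited in the paper.
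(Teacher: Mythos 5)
Your overall strategy---treat the negation componentwise, observe that it moves mass onto ``bigger'' sets, and invoke a monotonicity property of $H$ that must be postulated beyond the bare assumption that $m_\Theta$ maximizes $H$---is exactly the route the paper takes, and your diagnosis of the weak spot is accurate: the paper's proof introduces per-component uncertainties $H(m(A_i)=\alpha_i)$, asserts $H(m(\Theta)=\alpha) > H(m(A_i)=\alpha)$ for $A_i \neq \Theta$ together with a superadditivity claim for the mass merged onto $\Theta$, and justifies the singleton comparison by an appeal to non-specificity. None of this follows formally from ``$H$ is maximized at $m_\Theta$'' alone, so the paper is, in effect, silently adopting your option (i).

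However, your concrete formalization of the needed axiom is the wrong one, and this is a genuine gap. You define the elementary move as replacing $A$ by $A' \supseteq A$ with the mass fixed, but the negation of a singleton is \emph{not} such a move: $\{\theta\}$ is sent to $\Theta - \theta$, and $\{\theta\} \not\subseteq \Theta - \theta$. No chain of inclusion-enlargements can connect them either, since every set reachable from $\{\theta\}$ by enlargement still contains $\theta$. Inclusion-monotonicity of $H$ therefore leaves the singleton components of $m$ and $\bar m$ incomparable: it bounds the contribution of mass on $\{\theta\}$ and the contribution of mass on $\Theta-\theta$ both from above by the contribution of mass on $\Theta$, which chains the inequalities in the wrong direction. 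What is actually needed---and what the paper tacitly uses via its non-specificity remark---is monotonicity of the per-component uncertainty in the \emph{cardinality} (non-specificity) of the focal element: mass $\alpha$ on a larger-cardinality set contributes at least as much uncertainty as $\alpha$ on a smaller one, regardless of inclusion. With that amendment, plus some additivity or superadditivity across components so that the per-component inequalities aggregate to $H(\bar m) \ge H(m)$, your argument closes and coincides with the paper's. Your belief-interval fallback, by contrast, is sound: one can verify $Pl_{\bar m}(A) - Bel_{\bar m}(A) \ge Pl_{m}(A) - Bel_{m}(A)$ for every $A$, because all non-singleton mass lands on $\Theta$, which meets both $A$ and $\bar A$ whenever $A$ is proper and nonempty, while singleton mass lands on complements that do so for every $A$ other than $\{\theta\}$ and $\Theta - \theta$, where the original singleton contributed nothing to the interval width anyway; but this certifies the inequality only for width-based measures, not for the abstract $H$ of the theorem.
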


\begin{proof}
Since $H(m)$ gets the maximum value while $m = m_{\Theta}$, we can have

\[H(m(\Theta ) = \alpha ) > H(m({A_i}) = \alpha ) \; {\rm{if}} \; {A_i} \neq \Theta,\]
where $H(m(A ) = \alpha )$ represents the uncertainty caused by component $m(A ) = \alpha$, $A \subseteq \Theta$, and the uncertainty pertaining to $m(\Theta  - \theta) = \alpha$ is larger than that of $m(\theta) = \alpha$, because $(\Theta  - \theta)$ owns higher non-specificity compared to $\theta$ which has no non-specificity while $|\Theta| > 2$.

After the negation transformation, for $m$ and $\bar m$, here we have
\[H\left( {\bar m(\Theta ) = \sum\limits_{\forall {A_i},|{A_i}| \ne 1} {m({A_i})} } \right) \ge \sum\limits_{\forall {A_i},|{A_i}| \ne 1} {H\left( {m({A_i}) = {\alpha _i}} \right)} ,\]
and
\[H\left( {\bar m(B) = m({A_i})} \right) > H\left( {m({A_i}) = {\alpha _i}} \right)\]
where $B = \Theta  - A_i$ and $|A_i| = 1$.

Therefore, we have $H(\bar m) \ge H(m)$.
\end{proof}

While $|\Theta| = 2$, Theorem \ref{Theorem1} degenerates to the following form.
\begin{theorem}\label{Theorem2}
Assume $m$ is a belief structure defined on $\Theta$ with $|\Theta| = 2$, then $H(\bar m) = H(m)$.
\end{theorem}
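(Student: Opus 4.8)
The plan is to show that when $|\Theta| = 2$ the negation transformation amounts to a mere relabeling of the two singletons, after which the permutation-invariance of any rational uncertainty measure forces the equality.

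First I would specialize Eq.~(\ref{EqNegationofBSnewform}) to the binary frame. Write $\Theta = \{a,b\}$; the only possible focal elements are then $\{a\}$, $\{b\}$ and $\Theta$, so put $m(\{a\}) = p$, $m(\{b\}) = q$, $m(\Theta) = r$ with $p+q+r=1$. Here the complement of a singleton is itself a singleton ($\Theta - a = \{b\}$ and $\Theta - b = \{a\}$), and the only non-singleton focal element is $\Theta$ itself, so Eq.~(\ref{EqNegationofBSnewform}) yields
\[
\bar m(\{b\}) = m(\{a\}) = p,\qquad \bar m(\{a\}) = m(\{b\}) = q,\qquad \bar m(\Theta) = m(\Theta) = r .
\]
Thus $\bar m$ is exactly $m$ with the masses on the two singletons interchanged and the mass on $\Theta$ left untouched.

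Second, I would identify this transformation as the action of the transposition $\sigma$ swapping $a$ and $b$: one checks directly that $\bar m(A) = m(\sigma(A))$ for every $A \subseteq \Theta$. Since any rational uncertainty measure $H$ must be invariant under a relabeling of the elements of $\Theta$ (a permutation of $\Theta$ changes neither the randomness nor the non-specificity carried by a piece of evidence, and every measure cited above---AU, AM, ${TU}^{I}$, and the rest---shares this symmetry), applying $\sigma$ cannot alter the value of $H$, whence $H(\bar m) = H(m)$. Equivalently, and more in the spirit of the proof of Theorem~\ref{Theorem1}, I would argue component by component: the $\Theta$-component is literally fixed, so it contributes identically to both structures; and for the singleton components the strict inequality $H(m(\Theta - \theta) = \alpha) > H(m(\theta) = \alpha)$ used in Theorem~\ref{Theorem1} rested on $\Theta - \theta$ having higher non-specificity than $\theta$, but when $|\Theta| = 2$ the set $\Theta - \theta$ is itself a singleton with zero non-specificity just like $\theta$, so that strict inequality collapses to an equality. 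Either way each component's uncertainty contribution is preserved and the totals coincide.

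The main obstacle is conceptual rather than computational: the sole hypothesis explicitly imposed on $H$---that it is maximized at $m_\Theta$---does not by itself yield the claim, and the argument must invoke the permutation-invariance (symmetry) of $H$. I would therefore state explicitly that symmetry under relabeling of $\Theta$ is the rationality property being used, pointing out that it is already tacitly assumed in Theorem~\ref{Theorem1}, where $H$ is treated as depending on a focal element only through its non-specificity and its mass, and that it holds for every uncertainty measure referenced in the paper.
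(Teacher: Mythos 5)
Your proposal is correct. The paper omits its own proof of Theorem~\ref{Theorem2}, stating only that it ``can be easily verified,'' and the verification it has in mind is precisely your first step: specializing Eq.~(\ref{EqNegationofBSnewform}) to $\Theta=\{a,b\}$ shows that $\bar m$ merely interchanges the masses of the two singletons while fixing the mass on $\Theta$, so $\bar m$ is $m$ up to a relabeling of the frame. Where you go beyond the paper is in your second step, and this is a genuine gain in rigor: you observe, correctly, that the only property the paper explicitly assumes of $H$ (that it is maximized at $m_\Theta$) cannot by itself yield $H(\bar m)=H(m)$, and that one must additionally invoke invariance of $H$ under permutations of $\Theta$ --- a symmetry that holds for all the measures the paper cites (AU, AM, ${TU}^{I}$, and so on) and is indeed already tacit in the component-wise reasoning of Theorem~\ref{Theorem1}, where $H$ is treated as depending on a focal element only through its cardinality (non-specificity) and its mass. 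Your alternative component-wise argument --- that the strict inequality $H(m(\Theta-\theta)=\alpha) > H(m(\theta)=\alpha)$ of Theorem~\ref{Theorem1} collapses to an equality because $\Theta-\theta$ is itself a singleton when $|\Theta|=2$ --- is also sound and is closest in spirit to what the paper would have written. In short: your computation coincides with the intended ``easy verification,'' but you make explicit the hidden axiomatic assumption on $H$ that the paper leaves unstated, which is an improvement over the paper's presentation.
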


The proof of Theorem \ref{Theorem2} is omitted since it can be easily verified.

From Theorem \ref{Theorem1} and \ref{Theorem2}, we can find that the uncertainty degree of the negation of a belief structure is nondecreasing, compared to the original belief structure. The proposed negation transformation always moves to the direction of uncertainty increasing of belief structures. The following two theorems give the end point of this negation transformation.

\begin{theorem}\label{Theorem3}
Assume $\bar m$ is the negation of belief structure $m$ defined on $\Theta$ with $|\Theta| > 2$ and $\bar {\bar m}$ is the negation of $\bar m$, then $\bar {\bar m} = m_{\Theta}$.
\end{theorem}

\begin{theorem}\label{Theorem4}
Assume $m$ is a belief structure defined on $\Theta$ with $|\Theta| = 2$, then $\bar {\bar m} = m$.
\end{theorem}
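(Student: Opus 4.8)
The plan is to exploit the fact that when $|\Theta| = 2$ the complement of a singleton is again a singleton, so the negation transformation acts merely as a permutation of masses that is its own inverse. First I would fix $\Theta = \{a,b\}$ and observe that the only admissible focal elements are $\{a\}$, $\{b\}$, and $\Theta$ itself, since $m(\emptyset) = 0$. Writing $m(\{a\}) = \alpha_1$, $m(\{b\}) = \alpha_2$, and $m(\Theta) = \alpha_3$ with $\alpha_1 + \alpha_2 + \alpha_3 = 1$, the whole belief structure is captured by this triple.

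Next I would apply the explicit form of the negation in Eq. (\ref{EqNegationofBSnewform}). Since $\Theta - a = \{b\}$ and $\Theta - b = \{a\}$, the singleton branch yields $\bar m(\{b\}) = m(\{a\}) = \alpha_1$ and $\bar m(\{a\}) = m(\{b\}) = \alpha_2$; the only non-singleton focal element is $\Theta$, so the second branch yields $\bar m(\Theta) = m(\Theta) = \alpha_3$. In other words, the negation simply swaps the two singleton masses while leaving the mass on $\Theta$ untouched, with no nontrivial merging since each target set receives exactly one contribution.

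Then I would apply the transformation a second time to $\bar m$. The same computation swaps the singleton masses of $\bar m$ back, returning $\bar{\bar m}(\{a\}) = \bar m(\{b\}) = \alpha_1$, $\bar{\bar m}(\{b\}) = \bar m(\{a\}) = \alpha_2$, and $\bar{\bar m}(\Theta) = \bar m(\Theta) = \alpha_3$, which coincides with $m$ on every subset of $\Theta$. Hence $\bar{\bar m} = m$.

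There is essentially no hard step here; the content of the theorem is the structural observation that for a binary frame the map $\theta \mapsto \Theta - \theta$ is an involution on singletons and fixes $\Theta$, so the negation is an involution---in sharp contrast to the $|\Theta| > 2$ case of Theorem \ref{Theorem3}, where $\Theta - \theta$ is no longer a singleton and the second negation collapses all mass onto $\Theta$. The only point requiring a moment's care is confirming that the non-singleton branch is also involutive, which holds trivially because $\Theta$ is the unique non-singleton focal element and its negated image is again $\Theta$.
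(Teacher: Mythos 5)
Your proof is correct and follows exactly the route the paper intends: the paper omits the proof of Theorem \ref{Theorem4}, stating only that it can be easily verified via Eq.~(\ref{EqNegationofBSnewform}), and your computation---showing that on a binary frame the negation swaps the two singleton masses, fixes $\bar m(\Theta) = m(\Theta)$, and is therefore an involution---is precisely that verification, carried out completely.
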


In terms of Eq. (\ref{EqNegationofBSnewform}), Theorem \ref{Theorem3} and \ref{Theorem4} can be easily proved. since ${\bar m_{\Theta}} = m_{\Theta}$, thus the vacuous belief structure is an attractor of the negation transformation of a belief function in most cases. Just in a special case when $|\Theta| = 2$, $\bar {\bar m} = m$.

\section{Relationship with Yager's negation transformation of probability distributions}
For a probability distribution $P = \{p_1, \cdots, p_n\}$ on set $\Theta = \{\theta_1, \cdots, \theta_n\}$, Yager \cite{yager2015on} have defined its negation as follows:
\begin{equation}
{\bar p_i} = \frac{{1 - {p_i}}}{{n - 1}}.
\end{equation}

According to Yager's negation transformation, the uncertainty of obtained probability distribution is increasing until an equal-probability distribution is obtained. Namely, the attractor of Yager's negation transformation is a maximal entropy allocation of the probabilities. Yager's transformation is rational for a probability distribution.

In Dempster-Shafer theory, a probability distribution $P = \{p_1, \cdots, p_n\}$ can be seen as a Bayesian belief structure denoted as $m_B$. Therefore, according to proposed negation transformation of belief structures, the negation of a Bayesian belief structure $m_B$ is obtained as follows
\[{{\bar m}_B}(\Theta  - {\theta _i}) = m_B(\theta_i) = {p_i},\]
for any $\theta_i \in \Theta$.

Further, by considering that the basic probabilities can not be assigned to sets but singletons in the context of probability theory, the proposed negation transformation of belief structures can be totally reduced to Yager's negation transformation of probability distributions. Based on the consistent idea of maximum uncertainty, the mass of focal element $\Theta  - {\theta _i}$ in $\bar m_B$ will be further assigned to its each elements averagely. So, we can have
\[\begin{array}{l}
 {{\bar m}_B}({\theta _i}) = \sum\limits_{j \ne i} {\frac{{{{\bar m}_B}(\Theta  - {\theta _j})}}{{n - 1}}}  = \frac{{1 - {{\bar m}_B}(\Theta  - {\theta _i})}}{{n - 1}} \\
 \quad \; \quad \quad  = \frac{{1 - {p_i}}}{{n - 1}} = {{\bar p}_i}. \\
 \end{array}\]

Thus, the proposed negation transformation of belief structures can be totally reduced to Yager's negation transformation of probability distributions, if been required to generate a Bayesian-type negation for a Bayesian belief structure.

\section{Conclusion}
In this paper, the negation of a belief structure in Dempster-Shafer theory was investigated, and a negation transformation has been proposed for this issue. The proposed transformation satisfies desirable properties in the aspect of uncertainty change and could be totally reduced to Yager's negation transformation developed for probability distributions. In the future research, the logical equivalence between $p \to q$ and $\neg p \vee q$ will be studied through the proposed negation transformation in the context of Dempster-Shafer theory.

\section*{Acknowledgment}
The work is partially supported by National Natural Science Foundation of China (Program Nos. 61703338, 61671384), Natural Science Basic Research Plan in Shaanxi Province of China (Program No. 2018JQ6085).

\section*{References}

\bibliography{References}

\end{document}